\def\eqref#1{equation~\ref{#1}}
\def\1{\bm{1}}
\def\vv{{\bm{v}}}
\DeclareMathAlphabet{\mathsfit}{\encodingdefault}{\sfdefault}{m}{sl}
\SetMathAlphabet{\mathsfit}{bold}{\encodingdefault}{\sfdefault}{bx}{n}
\newcommand{\E}{\mathbb{E}}
\DeclareMathOperator*{\argmin}{arg\,min}
\newcommand{\defeq}{:=}
\newcommand{\Bop}{\mathcal{B}}
\newcommand{\Dset}{\mathcal{D}}
\newcommand{\Sset}{\mathcal{S}}
\newcommand{\Aset}{\mathcal{A}}
\newcommand{\SAset}{\Sset\times\Aset}
\newcommand{\Hset}{\mathcal{H}}
\newcommand{\Wset}{\mathcal{W}}
\newcommand{\Xset}{\mathcal{X}}
\newcommand{\Thetaset}{\Theta}
\newcommand{\Rset}{\mathbb{R}}
\newcommand{\D}{\mathbb{D}}
\newcommand{\Dist}[2]{\D(#1~\|~#2)}
\newcommand{\DistMMD}[2]{\D_{\kk}(#1~\|~#2)}
\newcommand{\norm}[1]{\left\|#1\right\|}
\newcommand{\mt}{{\operatorname{T}}}
\newcommand{\Simplex}{\Delta}
\newcommand{\one}[1]{\mathbf{1}\{#1\}}
\newcommand{\Bpi}{\mathcal{B}_\pi}
\newcommand{\Bpihat}{\hat{\mathcal{B}}_\pi}
\newcommand{\pibeh}{\pi_{\mathrm{BEH}}}
\newcommand{\appref}[1]{Appendix~\ref{#1}}
\newcommand{\kf}{k}
\newcommand{\kk}{\mathbf{k}} 
\newcommand{\kkf}[2]{\kk\left[#1;~#2\right]}
\newcommand{\sbar}{\bar{s}}
\newcommand{\abar}{\bar{a}}
\newtheorem{thm}{Theorem}[section]
\newtheorem{lem}{Lemma}[section]
\newtheorem{remark}{Remark}[section] 
\declaretheoremstyle[
  headfont=\color{magenta}\normalfont\bfseries,
  bodyfont=\color{black}\normalfont\itshape,
]{colored}
\theoremstyle:=definition,remark,plain\do{%
        \expandafter\g@addto@macro\csname th@\theoremstyle\endcsname{%
            \addtolength\thm@preskip\parskip
            }%
        }
\newcommand{\ali}[1]{\todo{\textbf{AM}: #1}}
\newcommand{\lihong}[1]{\todo{\textbf{LL}: #1}}
\newcommand{\qiang}[1]{\todo{\textbf{QL}: #1}}
\newcommand{\denny}[1]{\todo{\textbf{DZ}: #1}}
\renewcommand{\ali}[1]{}
\renewcommand{\lihong}[1]{}
\renewcommand{\qiang}[1]{}
\renewcommand{\denny}[1]{}
\date{}
\title{Black-box Off-policy Estimation for \\ Infinite-Horizon Reinforcement Learning}
\author{
Ali Mousavi\\
Google Research\\
\texttt{alimous@google.com}
\and
Lihong Li \\
Google Research\\
\texttt{lihong@google.com} 
\and
Qiang Liu\\
UT Austin\\
\texttt{lqiang@cs.utexas.edu}
\and
Denny Zhou\\
Google Research\\
\texttt{dennyzhou@google.com}
}
\begin{document}

\maketitle

\begin{abstract}
Off-policy estimation for long-horizon problems is important in many real-life applications such as healthcare and robotics, where high-fidelity simulators may not be available and on-policy evaluation is expensive or impossible.  Recently, \cite{liu18breaking} proposed an approach that avoids the \emph{curse of horizon} suffered by typical importance-sampling-based methods. While showing promising results, this approach is limited in practice as it requires data be drawn from the \emph{stationary distribution} of a \emph{known} behavior policy. In this work, we propose a novel approach that eliminates such limitations. In particular, we formulate the problem as solving for the fixed point of a certain operator.  Using tools from Reproducing Kernel Hilbert Spaces (RKHSs), we develop a new estimator that computes importance ratios of stationary distributions, without knowledge of how the off-policy data are collected. 
We analyze its asymptotic consistency and finite-sample generalization.
Experiments on benchmarks verify the effectiveness of our approach.
\end{abstract}

\section{Introduction}\label{sec:intro}

As reinforcement learning (RL)  
is increasingly applied to crucial real-life problems like robotics, recommendation and conversation systems, off-policy estimation becomes even more critical. The task here is to estimate the average long-term reward of a target policy, given historical data collected by (possibly unknown) behavior policies. Since the reward and next state depend on what action the policy chooses, simply averaging rewards in off-policy data does not estimate the target policy's long-term reward.  Instead, proper correction must be made to remove the bias in data distribution.


One approach is to build a simulator that mimics the reward and next-state transitions in the real world, and then evaluate the target policy against the simulator~\cite{fonteneau13batch,ie19recsim}. While the idea is natural, building a high-fidelity simulator could be extensively challenging in numerous domains, such as those that involve human interactions. Another approach is to use propensity scores as importance weights, so that we could use the weighted average of rewards in off-policy data as a suitable estimate of the average reward of the target policy. The latter approach is more robust, as it does not require modeling assumptions about the real world's dynamics.  It often finds more success in short-horizon problems like contextual bandits, but its variance often grows exponentially in the horizon, a phenomenon known as ``the curse of horizon''~\cite{liu18breaking}.
%

To address this challenge, \cite{liu18breaking} proposed to solve an optimization problem of a minimax nature, whose solution directly estimates the desired propensity score of states \emph{under the stationary distribution}, avoiding an explicit dependence on horizon. While their method is shown to give more accurate predictions than previous algorithms, it is limited in several important ways:
\begin{itemize}
\item{The method requires that data be collected by a known behavior policy. In practice, however, such data are often collected over an extended period of time by multiple, unknown behavior policies.  
For example, observational healthcare data typically contain patient records, whose treatments were provided by different doctors in multiple hospitals, each following potentially different procedures that are not always possible to specify explicitly.}
\item{The method requires that the off-policy data reach the stationary distribution of the behavior policy.  In reality, it may take a very long time for a trajectory to reach the stationary distribution, which may be impractical due to various reasons like costs and missing data.}

%
\end{itemize}

In this paper, we introduce a novel approach for the off-policy estimation problem that overcome these drawbacks. The main contributions of our work are three-fold:
\begin{itemize}
    \item We formulate the off-policy estimation problem into one of solving for the fixed point of an operator. Different from the related, and similar, Bellman operator that goes forward in time, this operator is backward in time.
    \item We develop a new algorithm, which does not have the aforementioned limitations of \cite{liu18breaking}, and analyze its generalization bounds.  Specifically, the algorithm does not require that the off-policy data come from the stationary distribution, or that the behavior policy be known.
    \item We empirically demonstrate the effectiveness of our method on several classic control benchmarks. In particular, we show that, unlike \cite{liu18breaking}, our method is effective even if the off-policy data has not reached the stationary distribution.
\end{itemize}
In the next section, we give a brief overview of recent and related works. We then move to describing the problem setting that we have used in the course of the paper and our off-policy estimation approach. Finally, we present several experimental results to show the effectiveness of our method.

\paragraph{Notation.}
In the following, we use $\Simplex(\Xset)$ to denote the set of distributions over a set $\Xset$. The $\ell_2$ norm of vector $x$ is $\norm{x}$. Given a real-valued function $f$ defined on some set $\Xset$, let $\norm{f}_2 \defeq \sqrt{\int_\Xset f(x)^2 dx}$.
Finally, we denote by $[n]$ the set $\{1,2,\ldots,n\}$, and $\one{A}$ the indicator function.  

\section{Related works}\label{sec:related}

Our work focuses on estimating a scalar (average long-term reward) that summarizes the quality of a policy and has extensive applications in practice. This is different from value function or policy learning from off-policy data~\cite{precup01off,maei10toward,sutton16emphatic,munos16safe,metelli18policy}, where the major goal is to ensure stability and convergence. Yet, these two problems share numerous core techniques, such as importance reweighting and doubly robustness. Off-policy estimation and evaluation can also be used as a component for policy optimization~\cite{jiang16doubly,gelada19off,liu19off,zhang19generalized}.


Importance reweighting, or inverse propensity scoring, has been used for off-policy RL~\cite{precup01off,murphy01marginal,li15minimax,munos16safe,hanna17bootstrapping,xie19optimal}. Its accuracy can be improved by various techniques~\cite{jiang16doubly,thomas16data,guo17using,farajtabar18more}. However, these methods typically have a variance that grows exponentially with the horizon, limiting their application to mostly short-horizon problems like contextual bandits~\cite{dudik11doubly,bottou13counterfactual}.  

There have been recent efforts to avoid the exponential blow-up of variance in basic inverse propensity scoring.  A few authors explored the alternative to estimate the propensity score of a state's \emph{stationary distribution}~\cite{liu18breaking,gelada19off}, when behavior policies are known. Later, \cite{nachum2019dualdice} extended this idea to situations with \emph{unknown} behavior policies. However, their approach only works for the discounted reward criterion. In contrast, our work considers the more general and challenging \emph{undiscounted} criterion. In the next section, we briefly mention the setting under which we study this problem and then present our {\em black-box} off-policy estimator.


Our black-box estimator is inspired by previous work for black-box importance sampling~\cite{liu17black}.  Interestingly, the authors show that it is beneficial to estimate propensity scores from data without using knowledge of the behavior distribution (called proposal distribution in that paper), \emph{even if} it is available; see also \cite{henmi07importance} for related arguments.
Similar benefits may exist for our black-box off-policy estimator developed here, although a systematic study is outside  the scope of this paper. 

\section{Problem Setting}\label{sec:setting}

Consider a Markov decision process (MDP)~\cite{puterman94markov} $M=\langle \Sset, \Aset, P, R, p_0, \gamma \rangle$, where $\Sset$ and $\Aset$ are the state and action spaces, $P$ is the transition probability function, $R$ is the reward function, $p_0\in\Simplex(\Sset)$ is the initial state distribution, and $\gamma\in[0,1]$ is the discount factor. A policy $\pi$ maps states to a distribution over actions: $\pi:\Sset\mapsto\Simplex(\Aset)$, and $\pi(a|s)$ is the probability of choosing action $a$ in state $s$ by policy $\pi$. With a fixed $\pi$, a trajectory $\tau=(s_0,a_0,r_0,s_1,a_1,r_1,\ldots)$ is generated as follows:\footnote{For simplicity in exposition, we assume rewards are deterministic. However, everything in this work generalizes directly to the case of stochastic rewards.}
\[
s_0 \sim p_0(\cdot), \quad a_t \sim \pi(\cdot|s_t), \quad r_t = R(s_t,a_t), \quad s_{t+1} \sim P(\cdot|s_t,a_t), \quad \forall t \ge 0\,.
\]

Given a \emph{target} policy $\pi$, we consider two reward criteria, undiscounted $(\gamma=1)$ and discounted ($\gamma<1$), where $\E_{\pi}[\cdot]$ indicates the trajectory $\tau$ is controlled by policy $\pi$:
\begin{eqnarray}
\text{(undiscounted)} \qquad & \rho_\pi & \defeq \lim_{T\to\infty}\E_{\pi}\left[\frac{1}{T}\sum_{t=1}^T r_t\right] = \E_{(s,a)\sim d_\pi}[r]\,, \label{eqn:polval} \\
\text{(discounted)} \qquad & \rho_{\pi,\lambda} & \defeq
(1-\gamma) \E_{\pi}\left[\sum_{t=0}^\infty \gamma^t r_t \right]\,. \label{eqn:polval_disc}
%
\end{eqnarray}
In the above, $d_\pi$ is the stationary distribution over $\SAset$, which exists and is unique under certain assumptions~\cite{levin17markov}.

The $\gamma<1$ case can be reduced to the undiscounted case of $\gamma=1$, but not vice versa. Indeed, one can show that the discounted reward in \eqref{eqn:polval_disc} can be interpreted as the stationary distribution of an induced Markov process, whose transition function is a mixture of $P$ and the initial-state distribution $p_0$. We refer interested readers to \appref{sec:reward-reduction} for more details. Accordingly, in the following and without the loss of generality, we will merely focus on the more general undiscounted criterion in \eqref{eqn:polval}, and suppress the unnecessary dependency on $p_0$ and $\gamma$.


In the off-policy estimation problem, we are interested in estimating $\rho_\pi$ for a given target policy $\pi$. However, instead of having access to on-policy trajectories generated by $\pi$, we have a set of $n$ transitions collected by some unknown (i.e., ``{\em black-box}'' or behavior-agnostic~\cite{nachum2019dualdice}) behavior mechanism $\pibeh$:
\[
\Dset \defeq \{(s_i, a_i, r_i, s_i')\}_{1 \le i \le n}\,.
\]
Therefore, the goal of off-policy estimation is to estimate $\rho_\pi$ based on $\Dset$, for a given target policy $\pi$. 

The setting we described above is quite general, covering a number of situations. For example, $\pibeh$ might be a single policy and $\Dset$ might consist of one or multiple trajectories collected by $\pibeh$. In this special case, $s_i'=s_{i+1}$ for $1<i<n$; this is the off-policy RL scenario widely studied~\cite{precup01off, sutton16emphatic, munos16safe, liu18breaking, gelada19off}. Furthermore, if $\pibeh=\pi$, we recover the on-policy setting. On the other hand, $\pibeh$ and $\Dset$ can consist of multiple policies and their corresponding trajectories. In this situation, unlike the single policy case $s_i'$ and $s_{i+1}$ might originate from two distinct policies. In general, one can consider $\pibeh$ as a distribution over $\SAset$ where $(s_i,a_i)$ in $\Dset$ are sampled from. Having introduced the general setting of the problem, we will describe our estimation approach in the next section.

\section{Black-box estimation}\label{sec:bb}

Our estimator is based on the following operator defined on functions over $\SAset$. For discrete state-action spaces, given any $d\in\Rset^{\SAset}$,
\begin{equation}
\Bpi d(s,a) \defeq \pi(a|s) \sum_{\xi\in\Sset,\alpha\in\Aset} P(s|\xi,\alpha) d(\xi,\alpha)\,. \label{eqn:bbop}
\end{equation}
While we will develop the rest of the paper using the discrete version above for simplicity, the continuous version can be similarly obtained without affecting the estimator and results: 
\begin{equation}
\Bpi d(s,a) = \pi(a|s) \int_{\xi,\alpha} \mathrm{d}P(s|\xi,\alpha)d(\xi,\alpha)\,, \label{eqn:bbop-c}
\end{equation}
where $P$ is now interpreted as the transition kernel.

We should note that $\Bpi$ is indeed different from the Bellman operator~\cite{puterman94markov}; although they have some similarities. In particular, given some state-action pair $(s,a)$, the Bellman operator is defined using next state $s'$ of $(s,a)$, while $\Bpi$ is defined using \emph{previous} state-actions $(\xi,\alpha)$ that \emph{transition to} $s$. It is in this sense that $\Bpi$ is backward (in time). Furthermore, as we will show later, $d$ has the interpretation of a distribution over $\SAset$. Therefore, $\Bpi$ describes how visitation \emph{flows} from $(\xi,\alpha)$ to $(s,a)$ and hence, we call it the \emph{backward flow} operator. Note that similar forms of $\Bpi$ have appeared in the literature, usually used to encode constraints in a dual linear program for an MDP~\cite{wang08stable,wang17primal,dai18boosting}. However, the application of $\Bpi$ for the off-policy estimation problem as considered here appears new to the best of our knowledge.

An important property of $\Bpi$ is that, under certain assumptions, the stationary distribution $d_\pi$ is the unique fixed point that lies in $\Simplex(\SAset)$~\cite{levin17markov}:
\begin{equation}\label{eqn:bop-fp}
d_\pi = \Bpi d_\pi \quad \text{and} \quad d_\pi \in \Simplex(\SAset)\,. 
\end{equation}
This property is the key element we use to derive our estimator as we describe in the following.

\subsection{Black-box estimator}

In most cases, off-policy estimation involves a weighted average of observed rewards $r_i$ in $\Dset$. We therefore aim to directly estimate these (non-negative) weights which we denote by $w=\{w_i\}\in\Simplex([n])$; that is, $w_i \ge 0$ for $i\in[n]$ and $\sum_{i=1}^n w_i = 1$. Note that the normalization of $w$ may be ensured by techniques such as self-normalized importance sampling~\cite{liu01monte}. 
Once such a $w$ is obtained, the estimated reward is given by:
\begin{equation}
\hat{\rho}_\pi = \sum_{i=1}^n w_i r_i \,. \label{eqn:bb-ips}
\end{equation}
%
Effectively, any $w\in\Simplex([n])$ defines an empirical distribution which we denote by $d_w$ over $\SAset$:
\begin{equation}
d_w(s,a) \defeq \sum_{i=1}^n w_i \one{s_i=s,a_i=a}\,.  \label{eqn:bb-dw}
\end{equation}
Equation~\ref{eqn:bb-ips} is equivalent to $\hat{\rho}_\pi = \E_{(s,a)\sim d_w}[r]$.
Comparing it to \eqref{eqn:polval}, we naturally want to optimize $w$ so that $d_w$ is close to $d_\pi$. Therefore, inspired by the fixed-point property of $d_\pi$ in \eqref{eqn:bop-fp}, the problem naturally becomes one of minimizing the discrepancy between $d_w$ and $\Bpi d_w$.  In practice, $w$ is often represented in a parametric way:
\begin{align}\label{eqn:bb-param}
w_i = \tilde{w}_i / \sum_l \tilde{w}_l\,, \qquad
\tilde{w}_i \defeq W(s_i,a_i;\omega) \ge 0\,,
\end{align}
where $W(.)$ is a parametric model, such as neural networks, with parameters $\omega\in\Omega$. We have now reached the following optimization problem:
\begin{equation}
\min_{\omega\in\Omega} \Dist{d_w}{\Bpi d_w}\,, \label{eqn:bb-opt}
\end{equation}
where $\Dist{\cdot}{\cdot}$ is some discrepancy function between distributions.
In practice, $\Bop$ is unknown, and must be approximated by samples in the dataset $\Dset$:
\[
\Bpihat d_w(s,a) \defeq \pi(a|s) \sum_{i=1}^n w_i ~ \1\{s_i'=s\} \,.
\]
Clearly, $\Bpihat d_w$ is a valid distribution over $\SAset$ induced by $w$ and $\Dset$, and the black-box estimator solves for $w$ by minimizing $\Dist{d_w}{\Bpihat d_w}$.

\subsection{Black-box estimator with MMD}

There are different choices for $\Dist{\cdot}{\cdot}$ in \eqref{eqn:bb-opt}, and multiple approaches to solve it~\cite{nguyen10estimating,dai17learning}.
%
Here, we describe one such algorithm based on Maximum Mean Discrepancy (MMD)~\cite{muandet17kernel}. For simplicity, the discussion in this subsection assumes $\SAset$ is finite, but the extension to continuous $\SAset$ is immediate.

Let $\kk(\cdot,\cdot)$ be a positive definite kernel function defined on $(\SAset)^2$.
Given two real-valued functions, $f$ and $g$, defined on $\SAset$ we define the following bilinear functional
\begin{equation}
\kkf{f}{g} \qquad \defeq \sum_{(s,a)\in\SAset, (\sbar,\abar)\in\SAset} f(s,a) \,\cdot\,\kk\left((s,a),(\sbar,\abar)\right) \,\cdot\,g(\sbar,\abar)\,. \label{eqn:kkf}
\end{equation}
Clearly, we have $\kkf{f}{f} \geq 0$ for any $f$ due to the positive definiteness of $\kk$.  
In addition, $\kk$ is called \emph{strictly integrally positive definite} if $\kkf{f}{f} = 0$ implies $\norm{f}_2 = 0$.  

Let $\mathcal H$ be the reproducing kernel Hilbert space (RKHS) associated with the kernel function $\kk$. 
This is the unique Hilbert space that includes functions that can be expressed as a sum of countably many terms: $f(\cdot)=\sum_i u^i ~ \kk((s^i,a^i), \cdot)$, where $\{u^i\}\subset \mathbb R$, and $\{(s^i,a^i)\}\subseteq \SAset$. The space $\Hset$ is equipped with an inner product defined as follow: given $f,g\in\Hset$ such that $f=\sum_i u^i~\kk((s^i,a^i), \cdot)$ and $g=\sum_i v^i~\kk((s^i,a^i), \cdot)$, the inner product is $\langle f, g\rangle_{\Hset} \defeq \sum_{i,j} u^i v^j ~\kk(s^i,a^i), (s^j, a^j))$, 
which induces the norm defined by $\norm{f}_{\mathcal H} \defeq \sqrt{\sum_{i,j} u^i u^j \kk((s^i,a^i), (s^j, a^j))}$.

Given $\Hset$, the maximum mean discrepancy between two distributions, $\mu_1$ and $\mu_2$, is defined by
\begin{equation*}
    \DistMMD{\mu_1}{\mu_2} 
\defeq \sup_{f\in\Hset}
\left\{\E_{\mu_1}[f] - \E_{\mu_2}[f], 
~~~~~\text{s.t.} \quad \norm{f}_\Hset\le1
\right\}\,.
\end{equation*}
Here, $f$ may be considered as a discriminator, playing a similar role as the discriminator network in generative adversarial networks~\cite{goodfellow14generative}, to measure the difference between $\mu_1$ and $\mu_2$. 
A useful property of MMD is that it admits a closed-form expression~\cite{gretton2012kernel}:
\begin{align*} 
\DistMMD{\mu_1}{\mu_2} 
& = \kkf{\mu_1-\mu_2}{\mu_1-\mu_2} \\
& = \kkf{\mu_1}{\mu_1}
- 2 \kkf{\mu_1}{\mu_2}  
+ \kkf{\mu_2}{\mu_2}\,, 
\end{align*}
where $\kkf{\cdot}{\cdot}$ is defined in \eqref{eqn:kkf},
and we used the bilinear property $\kkf{\cdot}{\cdot}$. 
Interested readers are referred to surveys~\cite{berlinet2011reproducing, muandet2017kernel} for more background on RKHS and MMD.

Applying MMD to our objective, 
we have
\begin{eqnarray*}
    \DistMMD{d_w}{\Bpihat d_w} 
&=& \kkf{d_w}{d_w} - 2\kkf{d_w}{\Bpihat d_w} + \kkf{\Bpihat d_w} {\Bpihat d_w}\,. 
\end{eqnarray*}

In the above, both $d_w$ and $\Bpihat d_w$ are simply probability mass functions on a finite subset of $\SAset$, consisting of state-actions encountered in $\Dset$.
It follows immediately from \eqref{eqn:kkf} that
\begin{eqnarray*}
\kkf{d_w}{d_w} &=& \sum_{i,j} w_i w_j \underbrace{\kf((s_i,a_i),(s_j,a_j))}_{K^{(0)}_{i,j}} \\
\kkf{d_w}{\Bpihat d_w} &=& \sum_{i,j} w_i w_j \underbrace{\sum_{a'} \pi(a'|s_j') \kf((s_i,a_i),(s_j',a'))}_{K^{(1)}_{i,j}} \\
\kkf{\Bpihat d_w} {\Bpihat d_w} &=& \sum_{i,j} w_i w_j \underbrace{\sum_{a_i',a_j'}\pi(a_i'|s_i')\pi(a_j'|s_j') \kf((s_i',a_i'),(s_j',a_j'))}_{K^{(2)}_{i,j}}.
\end{eqnarray*}
Defining $K_{i,j} \defeq K^{(0)}_{i,j} -2 K^{(1)}_{i,j} + K^{(2)}_{i,j}$, we can express the objective as a function of $\omega$ (since $\{w_i\}$ depends on $\omega$; see \eqref{eqn:bb-param}):
\begin{equation}\label{eq:loss_kernel}
\ell(\omega) = \sum_{i,j} w_i w_j K_{i,j} \,.
\end{equation}

\begin{remark}
Mini-batch training is an effective approach to solve large-scale problems. However, the objective $\ell(\omega)$ is not in a form that is ready for mini-batch training, as $w_i$ requires normalization (\eqref{eqn:bb-param}) that involves \emph{all} data in $\Dset$. Instead, we may equivalently minimize $L(\omega) \defeq \log\ell(\omega)$, which can be turned into a form that allow mini-batch training, using a trick that is also useful in other machine learning contexts~\cite{jean15using}. See
\appref{sec:bb-minibatch} for more details.
\end{remark}

Algorithm~\ref{alg:bb} in Appendix \ref{sec:ps-code} summarizes our estimator.  We next present theoretical analysis of our approach. We show the consistency of our result and provide a sample complexity bound.

\subsection{Theoretical Analysis}


\paragraph{Consistency.} The following theorem shows that the exact minimizer of \eqref{eqn:bb-opt} coincides with the fixed point of $\Bpi$, and the objective function measures the norm of the estimation error 
in an induced RKHS. 
To simplify exposition, we
assume $x = (s,a)$ and $x' = (s',a')$ 
a successive action-state pair following $x$: $x' \sim  d_\pi(\cdot~|~x)$, where 
$d_\pi(x'~|~x)$ is the transition probability from $x$ to $x'$, that is, $d_\pi(x'~|~x) = 
P(s'~|~s,a) \pi(a'|s')$. 
Similarly, we denote by $(\bar x, \bar x')=((\bar{s},\bar{a}),~ (\bar{s}',\bar{a}'))$ 
 an independent copy of $(x, x').$


\begin{thm}
Suppose $\kk$ is strictly integrally positive definite, and $d_\pi$ is the unique fixed point of $\Bpi$ in \eqref{eqn:bop-fp}. 
%
%
Then, for any $d\in\Simplex(\SAset)$,
\[
\D_\kk (d~||~\Bpi d) = 0 \quad\Longleftrightarrow\quad d = d_\pi\,.
\]
Furthermore, $\D_\kk(d~||~\Bpi d)$ equals an MMD between $d$ and $d_\pi$, with a transformed kernel: 
$$
\D_\kk(d~||~\Bpi d) = \D_{\tilde \kk} (d ~||~ d_\pi)\,,
$$
where $\tilde \kk(x, {\bar x)}$ is a positive definite kernel, defined by 
$$
\tilde \kk(x,{\bar x}) = 
\E_{\pi}[\kk(x,\bar x) -\kk(x,\bar x') - \kk(x',\bar x) + \kk(x',\bar x') ~|~(x, \bar x)],
$$
where the expectation is 
with respect to $x' \sim d_\pi(\cdot ~|~ x)$ and $\bar x'\sim d_{\pi}(\cdot ~|~ \bar x)$, with 
 $x'$ and $\bar x'$ drawn independently. 
\label{thm:consistency}
\end{thm}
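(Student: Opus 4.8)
The plan is to reduce both parts of the theorem to the closed-form identity $\D_\kk(\mu_1 \| \mu_2) = \kkf{\mu_1-\mu_2}{\mu_1-\mu_2}$ recorded above, combined with the linearity of $\Bpi$ and the fixed-point identity $d_\pi = \Bpi d_\pi$.

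For the equivalence, I would start from $\D_\kk(d \| \Bpi d) = \kkf{d-\Bpi d}{d-\Bpi d}$. Since $\kk$ is strictly integrally positive definite, this quantity is nonnegative and vanishes precisely when $\norm{d - \Bpi d}_2 = 0$, i.e.\ when $d = \Bpi d$. A one-line check shows $\Bpi$ maps $\Simplex(\SAset)$ into itself --- nonnegativity is immediate, and $\sum_{s,a}\Bpi d(s,a) = \sum_{\xi,\alpha} d(\xi,\alpha)\sum_s P(s\mid\xi,\alpha) = 1$ using $\sum_a \pi(a\mid s)=1$ --- so for $d \in \Simplex(\SAset)$ the relation $d = \Bpi d$ says exactly that $d$ is a fixed point of $\Bpi$ inside $\Simplex(\SAset)$, which by the assumed uniqueness forces $d = d_\pi$. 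The converse is trivial: $d = d_\pi$ gives $d - \Bpi d = d_\pi - \Bpi d_\pi = 0$.

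For the kernel-transform part, the \emph{key step} is the rewriting
\[
d - \Bpi d \;=\; (d - d_\pi) - \Bpi(d - d_\pi) \;=\; (I - \Bpi)(d - d_\pi),
\]
valid by linearity of $\Bpi$ and $d_\pi = \Bpi d_\pi$. Writing $\gK$ for the operator with kernel $\kk$ (so $\kkf{f}{g} = \sum_x f(x)(\gK g)(x)$) and $\Bpi^{*}$ for the adjoint of $\Bpi$ with respect to the Euclidean pairing $\langle f,g\rangle = \sum_x f(x)g(x)$ --- a direct computation gives $(\Bpi^{*} g)(x) = \E_\pi[g(x')\mid x]$ with $x' \sim d_\pi(\cdot\mid x)$ --- I would then set $\delta := d - d_\pi$ and compute
\[
\D_\kk(d \| \Bpi d) = \kkf{(I-\Bpi)\delta}{(I-\Bpi)\delta} = \big\langle \delta,\,(I-\Bpi^{*})\,\gK\,(I-\Bpi)\,\delta\big\rangle = \D_{\tilde\kk}(d \| d_\pi),
\]
where $\tilde\kk$ is the (symmetric) kernel of the operator $(I-\Bpi^{*})\gK(I-\Bpi)$. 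Reading this kernel off entrywise is then routine: applying $(I-\Bpi)$ on the right turns $\kk(x,\bar x)$ into $\kk(x,\bar x) - \E[\kk(x,\bar x')\mid\bar x]$, and applying $(I-\Bpi^{*})$ on the left subtracts the same expression with $x$ replaced by $x'$ and averaged over $x'\sim d_\pi(\cdot\mid x)$, giving exactly the claimed four-term formula with $x'$ and $\bar x'$ independent. Positive semi-definiteness of $\tilde\kk$ is then immediate, since $\sum_{x,\bar x} f(x)\tilde\kk(x,\bar x)f(\bar x) = \kkf{(I-\Bpi)f}{(I-\Bpi)f} \ge 0$ for every $f$.

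There is no deep obstacle here; the substantive idea is the decomposition $d - \Bpi d = (I-\Bpi)(d - d_\pi)$, which converts the fixed-point discrepancy into an honest MMD against $d_\pi$. The only points requiring care are keeping the conditioning straight and using the independence of $x'$ and $\bar x'$ correctly when expanding the fourfold product, and --- if one wants the continuous-$\SAset$ version rather than the finite one used in this subsection --- justifying the adjoint manipulations and the interchange of sums/integrals, which hold under the regularity already implicit in the setup.
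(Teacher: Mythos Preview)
Your proposal is correct and follows essentially the same route as the paper: both use the closed-form $\D_\kk(d\|\Bpi d)=\kkf{d-\Bpi d}{d-\Bpi d}$ together with strict integral positive definiteness and uniqueness for the equivalence, and both exploit the key identity $d-\Bpi d=(I-\Bpi)(d-d_\pi)$ and the adjoint (your $\Bpi^*$ is the paper's $\mathcal{P}_\pi$) to unwind the quadratic form into the four-term kernel $\tilde\kk$. Your additions --- the explicit check that $\Bpi$ preserves $\Simplex(\SAset)$ and the one-line positive semi-definiteness argument for $\tilde\kk$ --- are welcome but do not change the strategy.
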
  
%

\paragraph{Generalization.} 
We next give a sample complexity analysis. In practice, 
the estimated weight $\hat w$ is based on minimizing the empirical loss
$\D_\kk(d_w~||~\hat \Bpi d_w)$, where $\Bpi$ is replaced by the empirical approximation $\hat \Bpi$. 
The following theorem compares the empirical weights 
$\hat w$ with the \emph{oracle weight} $w_*$ obtained by minimizing the expected loss $\D_\kk(d_w~||~\Bpi d_w)$, with the exact transition operator $\Bpi$. 




\begin{thm}
\label{thm:generalization}
Assume the weight function is decided by $w_i =  W(s_i, a_i; ~ \omega)/n$. Denote by $\mathcal W = \{\tilde W(\cdot; ~ \omega) \colon\omega \in \Omega \}$ the model class of $W(\cdot; \omega)$. 
Assume $
\hat w $ is the minimizer of the empirical loss 
$\D_\kk(d_w~||~\hat \Bpi d_w)$
and $w^*$ the minimizer of expected loss $\D_\kk(d_w~||~\Bpi d_w)$. 
Assume $\{x_i\}_{i=1}^n$ are i.i.d. samples. 
Then, with probability $1-\delta$ we have
\begin{align*} 
\D_\kk(d_{\hat w} ~||~ \Bpi d_{
\hat w}) 
- \D_\kk(d_{w_*} ~||~ \Bpi d_{ w_*}) \leq  
 16 r_{\max} \mathcal R_n(\mathcal W) +  
 \frac{16 r_{\max}^2   + r_{\max}^2\sqrt{8\log(1/\delta)}}{\sqrt{n}}, 
\end{align*} 
where $\mathcal R_n(\mathcal W)$ denotes the expected Rademacher complexity of $\mathcal W$ with data size $n$, and 
$r_{\max} \defeq \max(\norm{\mathcal W}_\infty, \sup_{x} \sqrt{\kk(x,x)})$,    with $\norm{\mathcal W}_\infty:= \sup\{\norm{W}_\infty \colon W\in \mathcal W\}.$
This suggests a generalization error of $O(1/\sqrt{n})$ if $\mathcal R_n(\mathcal W) = O(1/\sqrt{n})$, 
which is typical for parametric families of functions.  
\end{thm}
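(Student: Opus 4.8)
The plan is to run the classical excess‑risk / uniform‑deviation argument, carried out in the RKHS geometry. Write $\hat L(\omega)\defeq\D_\kk(d_w\,\|\,\hat\Bpi d_w)$ and $L(\omega)\defeq\D_\kk(d_w\,\|\,\Bpi d_w)$ for the empirical and population objectives induced by $\omega$ (so $\hat w$ minimizes $\hat L$ and $w_*$ minimizes $L$). The standard three‑term split gives
\[
L(\hat w)-L(w_*) \;=\; \big[L(\hat w)-\hat L(\hat w)\big] \;+\; \underbrace{\big[\hat L(\hat w)-\hat L(w_*)\big]}_{\le 0} \;+\; \big[\hat L(w_*)-L(w_*)\big] \;\le\; 2\sup_{\omega\in\Omega}\big|\hat L(\omega)-L(\omega)\big|,
\]
so it suffices to control the uniform deviation $\sup_\omega|\hat L(\omega)-L(\omega)|$ with probability $1-\delta$; the final factor $2$ is what turns an underlying $\sqrt{2\log(1/\delta)}$‑type bound into the stated $\sqrt{8\log(1/\delta)}$ and produces the $16$'s.

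The second step is to rewrite both objectives as squared norms of mean embeddings. Using the closed form of MMD together with the definitions of $\Bpi$ and $\hat\Bpi$, the mean embeddings of $d_w$, $\Bpi d_w$ and $\hat\Bpi d_w$ are all weighted sums of canonical feature maps $\mu(x)\defeq\kk(x,\cdot)$, giving $\hat L(\omega)=\big\|\sum_i w_i\hat\psi_i\big\|_\Hset^2$ and $L(\omega)=\big\|\sum_i w_i\bar\psi_i\big\|_\Hset^2$, where $\hat\psi_i\defeq\mu(x_i)-\E_{a'\sim\pi(\cdot\mid s_i')}[\mu(s_i',a')]$ and $\bar\psi_i\defeq\mu(x_i)-\E_{x_i'\sim d_\pi(\cdot\mid x_i)}[\mu(x_i')]$. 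The key structural facts are: (i) since $s_i'\sim P(\cdot\mid s_i,a_i)$, one has $\E[\hat\psi_i\mid x_i]=\bar\psi_i$, and the residual embeddings $\hat\psi_i-\bar\psi_i$ are mean‑zero and independent conditionally on $\{x_i\}$; and (ii) the uniform bounds $\|\mu(x)\|_\Hset=\sqrt{\kk(x,x)}\le r_{\max}$, $\|\hat\psi_i\|_\Hset,\|\bar\psi_i\|_\Hset\le 2r_{\max}$, and $\sum_i w_i\le 1$ (normalized weights). Writing $A_\omega\defeq\sum_i w_i\hat\psi_i$, $B_\omega\defeq\sum_i w_i\bar\psi_i$ (so $\|A_\omega\|_\Hset,\|B_\omega\|_\Hset\le 2r_{\max}$) and using $\big|\|A_\omega\|^2-\|B_\omega\|^2\big|\le(\|A_\omega\|+\|B_\omega\|)\,\|A_\omega-B_\omega\|\le 4r_{\max}\,\|A_\omega-B_\omega\|_\Hset$ reduces the problem to a uniform bound on the centered weighted process $\sup_\omega\big\|\sum_i w_i(\hat\psi_i-\bar\psi_i)\big\|_\Hset$.

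This last quantity is handled by two standard ingredients. First, a bounded‑differences (McDiarmid) argument over the sampled transitions: replacing one $s_i'$ perturbs only $\hat\psi_i$ (hence only $A_\omega$, not $L$), moving $\sup_\omega|\hat L(\omega)-L(\omega)|$ by $O(r_{\max}^2/n)$, which gives concentration of the deviation around its mean at rate $r_{\max}^2\sqrt{\log(1/\delta)/n}$. Second, to bound the mean: standard symmetrization introduces Rademacher signs $\sigma_i$ (legitimate because $\hat\psi_i-\bar\psi_i$ is conditionally zero‑mean and bounded), after which Maurer's vector‑valued contraction inequality peels off the fixed Hilbert‑space vectors $\hat\psi_i-\bar\psi_i$ (Lipschitz constant $\le 2r_{\max}$) and leaves the scalar Rademacher complexity $\frac1n\E\sup_\omega\sum_i\sigma_i W(x_i;\omega)=\mathcal R_n(\mathcal W)$ of the weight class, contributing the $r_{\max}\mathcal R_n(\mathcal W)$ term; a crude dimension‑free estimate ($\E_\sigma\|\sum_i w_i\sigma_i v_i\|_\Hset\le(\sum_i w_i^2\|v_i\|_\Hset^2)^{1/2}$) together with the $O(1/n)$ V‑statistic bias of $\hat L$ accounts for the residual $r_{\max}^2/\sqrt n$ term. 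Multiplying through by the $4r_{\max}$ factor and the $2$ from the excess‑risk split, and tracking numerical constants, yields the stated inequality.

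The main obstacle is the vector‑contraction step: the objective is a quadratic form in a Hilbert‑space‑valued, $\omega$‑indexed weighted empirical sum, and care is needed to (a) verify that the residual embeddings $\hat\psi_i-\bar\psi_i$ are genuinely conditionally independent, mean‑zero, and uniformly bounded so that symmetrization applies, and (b) transfer the complexity from the Hilbert‑valued process down to the scalar model class $\mathcal W$ via Maurer's inequality without incurring dimension‑dependent factors. The remaining pieces — the excess‑risk split, the RKHS rewriting via the closed form of MMD, and the McDiarmid bound — are routine once the residual‑embedding notation is in place.
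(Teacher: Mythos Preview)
Your high-level plan is sound and, in fact, the paper's proof shares the same skeleton: reduce to a uniform bound on $Z:=\sup_{w}\D_\kk(\Bpihat d_w\,\|\,\Bpi d_w)$, apply McDiarmid for concentration, then symmetrize. The paper's initial decomposition is different from yours, however: rather than the three-term excess-risk split followed by $|\,\|A\|^2-\|B\|^2\,|\le 4r_{\max}\|A-B\|$, it uses the triangle inequality for the MMD metric directly, writing $\D_\kk(d_{\hat w}\,\|\,\Bpi d_{\hat w})\le \D_\kk(d_{\hat w}\,\|\,\Bpihat d_{\hat w})+\D_\kk(\Bpihat d_{\hat w}\,\|\,\Bpi d_{\hat w})$ and iterating, to arrive at $L(\hat w)-L(w_*)\le 2Z$ with no squared-norm detour. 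This is what produces the stated constants; your route would pick up an extra factor of $r_{\max}$ that the theorem does not have.

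The more substantive issue is the post-symmetrization step. After symmetrization the quantity to control is
\[
\E_\sigma\,\sup_{\omega}\Big\|\sum_i\sigma_i\,w_i\,(\hat\psi_i-\bar\psi_i)\Big\|_{\Hset}
\;=\;
\E_\sigma\,\sup_{w\in\mathcal W}\,\sup_{f\in\mathcal B_\kk}\sum_i\sigma_i\,w(x_i)\,\langle f,\hat\psi_i-\bar\psi_i\rangle_\Hset,
\]
which is the Rademacher complexity of the \emph{product} class $\mathcal W\otimes\mathcal B_\kk=\{w(x)f(x'):w\in\mathcal W,\,f\in\mathcal B_\kk\}$. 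Maurer's vector contraction controls expressions of the form $\E_\sigma\sup_t\sum_i\sigma_i h_i(t_i)$ with Lipschitz $h_i:\ell_2\to\mathbb R$; here the Hilbert norm sits \emph{outside} the Rademacher sum and there are two coupled suprema, so Maurer does not ``peel off the fixed Hilbert vectors'' as you describe. What is needed is a product-class Rademacher bound. The paper obtains one via polarization, $fg=\tfrac14[(f+g)^2-(f-g)^2]$, followed by scalar Lipschitz contraction on $t\mapsto t^2$, yielding $\mathcal R_n(\mathcal W\otimes\mathcal B_\kk)\le 2(\|\mathcal W\|_\infty+\|\mathcal B_\kk\|_\infty)\big(\mathcal R_n(\mathcal W)+\mathcal R_n(\mathcal B_\kk)\big)$. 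The residual $r_{\max}^2/\sqrt n$ term then comes from the standard RKHS estimates $\|\mathcal B_\kk\|_\infty\le r_\kk$ and $\mathcal R_n(\mathcal B_\kk)\le r_\kk/\sqrt n$, not from a V-statistic bias. If you swap the Maurer step for this product-class lemma (and use the metric triangle inequality instead of the squared-norm factorization), the remainder of your outline goes through with the stated constants.
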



\section{Experiments}\label{sec:exp}

In this section, we present experiments to compare the performance of our proposed method with other baselines on the off-policy evaluation problem. In general and for each experiment, we use a behavior policy $\pi_{\rm BEH}$ to generate trajectories of length $T_{\rm BEH}$. We then use these generated samples from a behavior policy to estimate the expected reward of a given target policy $\pi$. To compare our approach with other baselines, we use the root mean squared error (RMSE) with respect to the average long-term reward of the target policy $\pi$.  The latter is estimated using a trajectory of length $T_{\rm TAR} \gg 1$.  In particular, we compare our proposed {\em black-box} approach with the following baselines:

\begin{itemize}
    \item {\em naive averaging} baseline in which we simply estimate the expected reward of a target policy by averaging the rewards over the trajectories generated by the behavior policy.
    \item {\em model-based} baseline where we use the kernel regression technique with a standard Gaussian RBF kernel. We set the bandwidth of the kernel to the median (or \nth{25} or \nth{75} percentiles) of the pairwise euclidean distances between the observed data points.
    \item{ \em inverse propensity score (IPS)} baseline introduced by \cite{liu18breaking}. 
\end{itemize}

We will first use a simple MDP from \cite{thomas16data} to highlight the IPS drawback we previously mentioned in Section \ref{sec:intro}.
We then move to classical control benchmarks. 

\begin{figure}
     \centering
     \begin{subfigure}[b]{0.3\textwidth}
         \centering
         \includegraphics[width=\textwidth]{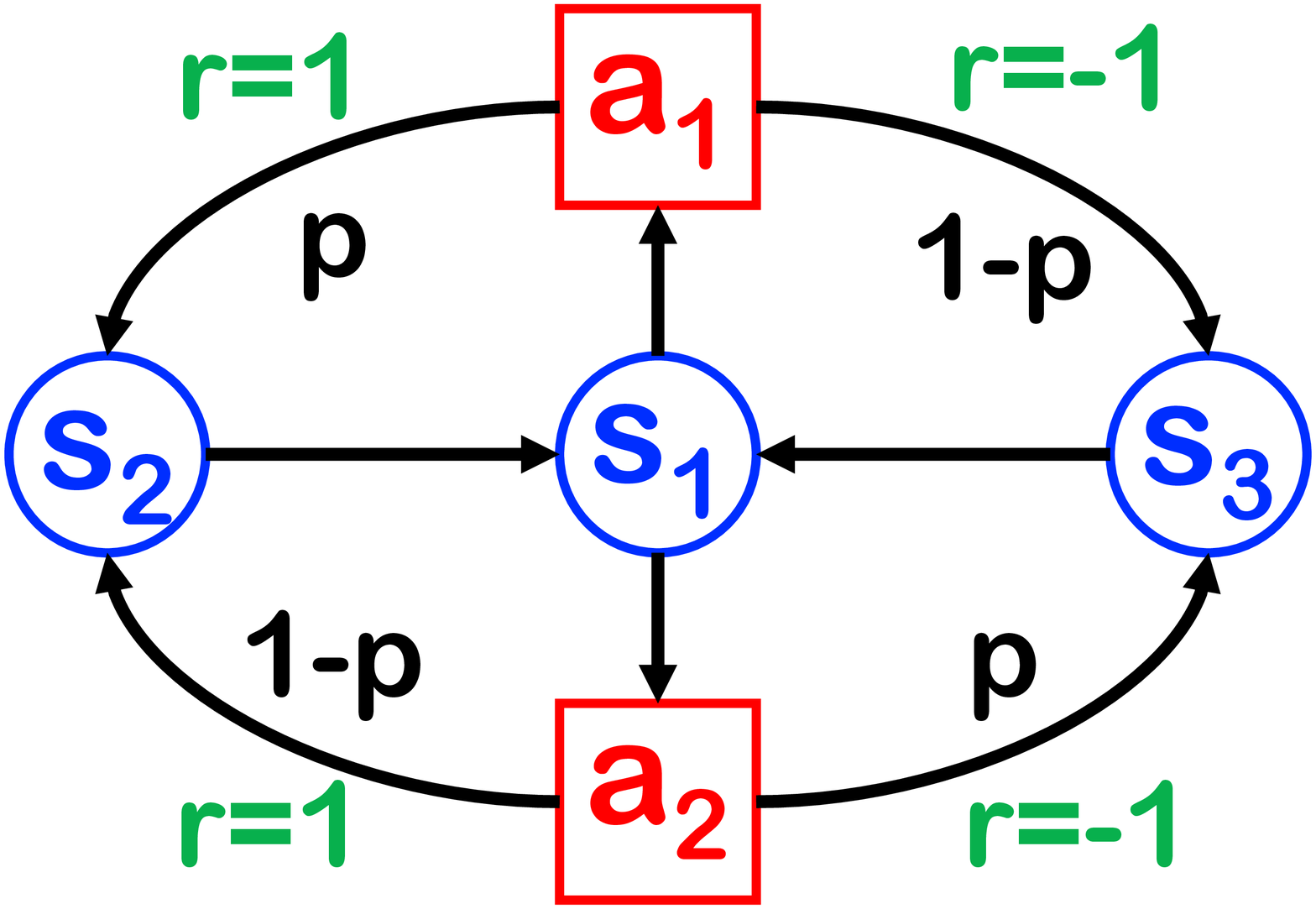}
         \caption{ModelWin MDP from \cite{thomas16data}.}
         \label{fig:modelwin_a}
     \end{subfigure}
    \hspace{0.15\textwidth}
     \begin{subfigure}[b]{0.3\textwidth}
         \centering
         \includegraphics[width=\textwidth]{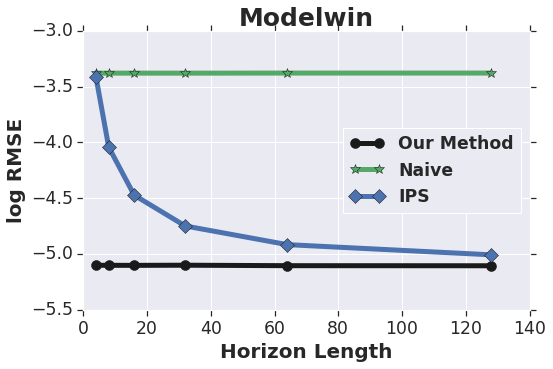}
         \caption{The RMSE of different methods as we change the length of horizon w.r.t the target policy reward. IPS depends on the horizon length while our method is independent of the horizon length.}
         \label{fig:modelwin_b}
     \end{subfigure}
    
\end{figure}

\subsection{Toy Example}
The {\em ModelWin} domain first introduced in \cite{thomas16data} is a fully observable MDP with three states and two actions as denoted in Figure \ref{fig:modelwin_a}. The agent always begins in $s_1$ and should choose between two actions $a_1$ and $a_2$. If the agent chooses $a_1$, then with probability of $p$ and $1-p$ it makes a transition to $s_2$ and $s_3$ and receives a reward of $r=1$ and $r=-1$, respectively. On the other hand, if the agent chooses $a_2$, then with probability of $p$ and $1-p$ it makes a transition to $s_3$ with the reward of $r=-1$ and $s_2$ with the reward of $r=1$, respectively. Once the agent is in either $s_2$ or $s_3$, it goes back to the $s_1$ in the next step without any reward. In our experiments, we set $p=0.4$.

We define the behavior and target policies as the following. In the target policy, once the agent is in $s_1$, it chooses $a_1$ and $a_2$ with the probability of 0.9 and 0.1, respectively. On the other hand and for the behavior policy, once the agent is in $s_1$, it chooses $a_1$ and $a_2$ with the probability of 0.7 and 0.3, respectively. We calculate the average on-policy reward from samples based on running a trajectory of length $T_{\rm TAR}=50,000$ collected by the target policy. We estimate this on-policy reward using trajectories of length $T_{\rm BEH}\in\{4,8,16,32,64,128\}$ collected by the behavior policy. In each case, we set the number of trajectories such that the total number of transitions (i.e., $T_{\rm BEH}$ times the number of trajectories) is kept constant. For example, for $T_{\rm BEH}=4$ and $T_{\rm BEH}=8$ we use 50,000 and 25,000 trajectories, respectively. Since the problem has finitely many state-actions, we use the tabular method and hence, \eqref{eq:loss_kernel} turns into a quadratic programming. We then report the result of each setting based on 10 Monte-Carlo samples.

As we can see in Figure \ref{fig:modelwin_b}, the naive averaging method performs poorly consistently and independent of the length of trajectories collected by the behavior policies. On the other hand, IPS performs poorly when the collected trajectories have short-horizon and gets better as the horizon length of trajectories get larger. 
This is expected for IPS --- as mentioned in Section \ref{sec:intro}, it requires data be drawn from the stationary distribution.
In contrast, as shown in Figure \ref{fig:modelwin_b}, our black-box approach performance is independent of the horizon length, and substantially better.

\subsection{Classic Control}
We now focus on four classic control problems. We begin by briefly describing each problem and then compare the performance of our method with other approaches on these problems.  Note that for these problems are episodic, we convert them into infinite-horizon problems by resetting the state to a random start state once the episode terminates.

\paragraph{Pendulum.} In this environment, the goal is to control a pendulum in a vertical position. State variables are the pole angle $\theta$ and velocity $\dot{\theta}$. The action $a$ is the torque in the set $\{-2,-1,0,1,2\}$ applied to the base. We set the reward function to $-(\theta^2 + 0.1 \dot{\theta}^2 + 0.001 a^2)$.

\paragraph{Mountain Car.} For this problem, the goal is to drive up the car to top of a hill. Mountain Car has a state space of $\mathbb{R}^2$ (the position and speed of the car) and three possible actions (negative, positive, or zero acceleration). We set the reward to +100 when the car reaches the goal and -1 otherwise.

\paragraph{Cartpole.} The goal here is to prevent an attached pole to a cart from falling by changing the cart's velocity. Cartpole has a state space of $\mathbb{R}^4$ (cart position, velocity, pole angle and velocity) and two possible actions (moving left or right). Reward is -100 when the pole falls and +1 otherwise.

\paragraph{Acrobot.} In this problem, our goal is to swing a 2-link pendulum above the base. Acrobot has a state space of $\mathbb{R}^6$ ($\sin(.)$ and $\cos(.)$ of both angles and angular velocities) and three possible actions (applying +1, 0 or -1 torque on the joint). Reward is +100 for reaching the goal and -1 otherwise.

\begin{figure}
     \centering
     \begin{subfigure}[b]{0.24\textwidth}
         \centering
         \includegraphics[width=\textwidth]{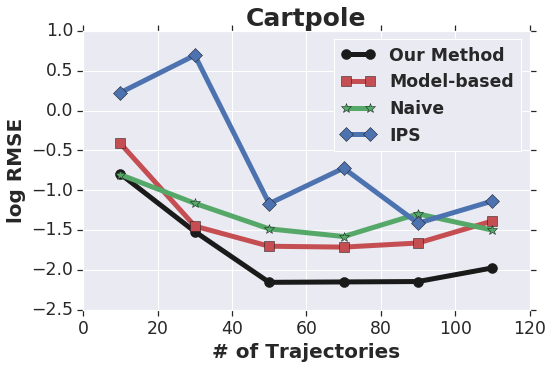}
     \end{subfigure}
     \hfill
     \begin{subfigure}[b]{0.24\textwidth}
         \centering
         \includegraphics[width=\textwidth]{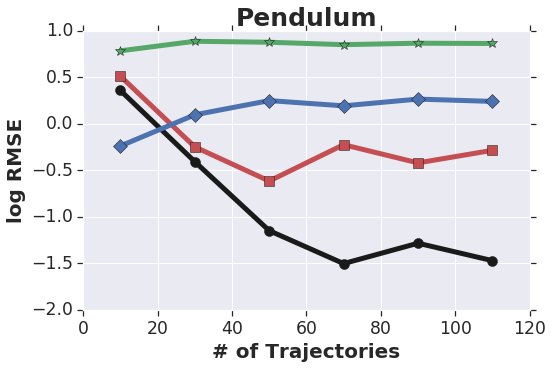}
     \end{subfigure}
     \hfill
     \begin{subfigure}[b]{0.24\textwidth}
         \centering
         \includegraphics[width=\textwidth]{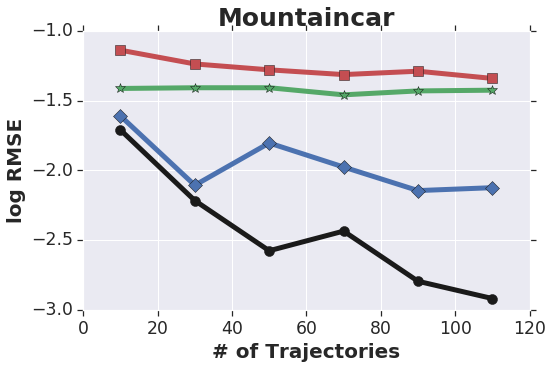}
     \end{subfigure}
     \hfill
     \begin{subfigure}[b]{0.24\textwidth}
         \centering
         \includegraphics[width=\textwidth]{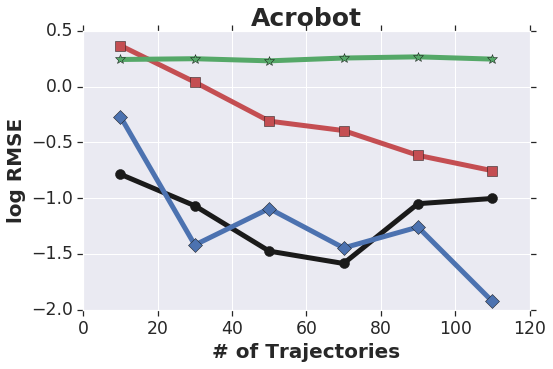}
     \end{subfigure}
        \caption{The RMSE of different methods w.r.t the target policy reward as we change the number of trajectories. Our black-box approach outperforms other methods on three problems.}
        \label{fig:RMSE}
\end{figure}

\begin{figure}
     \centering
     \begin{subfigure}[b]{0.24\textwidth}
         \centering
         \includegraphics[width=\textwidth]{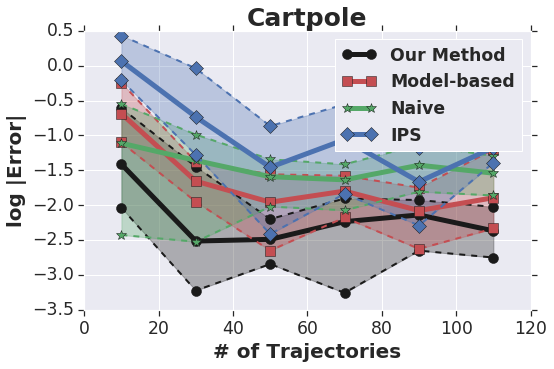}
     \end{subfigure}
     \hfill
     \begin{subfigure}[b]{0.24\textwidth}
         \centering
         \includegraphics[width=\textwidth]{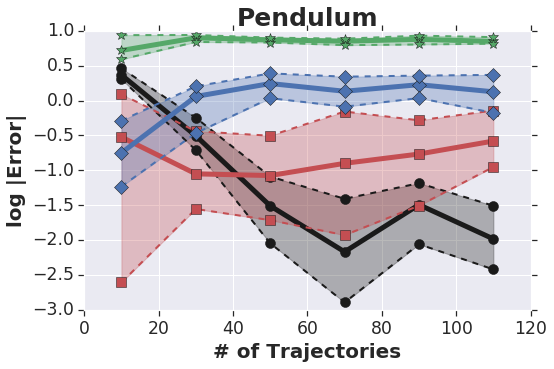}
     \end{subfigure}
     \hfill
     \begin{subfigure}[b]{0.24\textwidth}
         \centering
         \includegraphics[width=\textwidth]{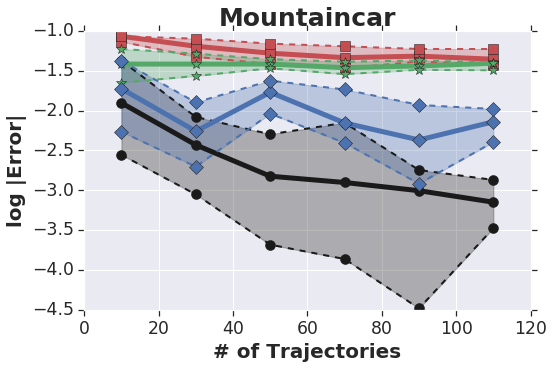}
     \end{subfigure}
     \hfill
     \begin{subfigure}[b]{0.24\textwidth}
         \centering
         \includegraphics[width=\textwidth]{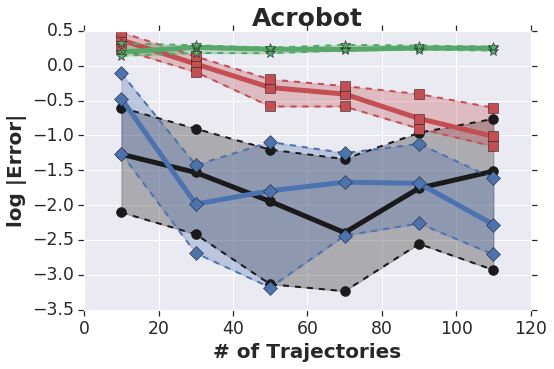}
     \end{subfigure}
        \caption{The median and error bars at \nth{25} and \nth{75} percentiles of different methods w.r.t the target policy reward as we change the number of trajectories. The trend of results is similar to Figure \ref{fig:RMSE}.}
        \label{fig:quantiles}
\end{figure}

For each environment, we train a near-optimal policy $\pi_{+}$ using the {\em Neural Fitted Q Iteration} algorithm \cite{riedmiller2005neural}. We then set the behavior and target policies as $\pi_{\rm BEH}=\alpha_1 \pi_{+} + (1-\alpha_1) \pi_{-}$ and $\pi=\alpha_2 \pi_{+} + (1-\alpha_2) \pi_{-}$, where $\pi_{-}$ denotes a random policy, and $0\leq\alpha_1,\alpha_2\leq1$ are two constant values making the behavior policy distinct from the target policy. In our experiments, we set $\alpha_1=0.7$ and $\alpha_2=0.9$. In order to calculate the on-policy reward, we use a single trajectory collected by $\pi$ with $T_{\rm TAR}=50,000$. For off-policy data, we use multiple trajectories collected by $\pi_{\rm BEH}$ with $T_{\rm BEH}=200$. In all the cases, we use a 3-layer (having 30, 20, and 10 hidden neurons) feed-forward neural network with the sigmoid activation function as our parametric model in \eqref{eqn:bb-param}. For each setting, we report results based on 20 Monte-Carlo samples. 

Figure \ref{fig:RMSE} shows the $\log$ of RMSE w.r.t. the target policy reward as we change the number of trajectories collected by the behavior policy. We should note that all methods except the naive averaging method have hyperparameters to be tuned. For each method, the optimal set of parameters might depend on the number of trajectories (i.e., size of the training data). However, in order to avoid excessive tuning and to show how much each method is robust to a change in the number of trajectories, we only tune different methods based on 50 trajectories and use the same set of parameters for other settings. As we can see, the naive averaging performance is almost independent of the number of trajectories. Our method outperforms other approaches on three environments and it is only the Acrobot in which IPS performs comparably to our black-box approach. In order to have a robust evaluation against outliers, we have plotted the median and error bars at \nth{25} and \nth{75} percentiles in Figure \ref{fig:quantiles}. If we compare the Figures \ref{fig:RMSE} and \ref{fig:quantiles}, we notice that the trend of results is almost the same in both. In Appendix \ref{sec:sensitivity}, we have studied how changing $\alpha_1$ of the behavior policy affects the final RMSE.

\section{Conclusions}
In this paper, we presented a novel approach for solving the off-policy estimation problem in the long-horizon setting. Our method formulates the problem as solving for the fixed point of a ``backward flow'' operator. We showed that unlike previous works, our approach
does not require the knowledge of the behavior policy or stationary off-policy data. We presented
experimental results to show the effectiveness of our approach compared to previous baselines. In the future, we plan to use structural domain knowledge to improve the estimator and consider a random time horizon in episodic RL.

\bibliography{main}
\bibliographystyle{plain}

\newpage

\appendix

\section{Reduction from Discounted to Undiscounted Reward}
\label{sec:reward-reduction}

The same reduction is used in \cite{liu18breaking}.  For completeness, we give the derivation details here, for the case of finite state/actions.  The derivation can be extended to general state-action spaces, with proper adjustments in notation.

Let $\tau=(s_0,a_0,r_0,s_1,a_1,\ldots)$ be a trajectory generated by $\pi$, and $d_t\in\Simplex(\SAset)$ be the distribution of $(s_t,a_t)$.  Clearly,
\begin{eqnarray*}
d_0(s,a) &=& p_0(s) \pi(a|s) \\
d_{t+1}(s,a) &=& \sum_{\xi,\alpha} d_t(\xi,\alpha) P(s|\xi,\alpha) \pi(a|s), \qquad \forall t>0\,.
\end{eqnarray*}
Using a matrix form, the recursion above can be written equivalently as $d_{t+1}=P_\pi^\mt d_t$, where $P_\pi$ is given by
\[
P_\pi(s,a|\xi,\alpha) = P(s|\xi,\alpha) \pi(a|s)\,.
\]
The discounted reward of policy $\pi$ is
\[
\rho_{\pi,\gamma} = (1-\gamma) \E_{\pi}\left[\sum_{t=0}^\infty \gamma^t r_t \right] = \E_{(s,a)\sim d_{\pi,\gamma}}[R(s,a)]\,,
\]
with
\[
d_{\pi,\gamma} \defeq (1-\gamma) \left(d_0 + \gamma d_1 + \gamma^2 d_2 + \cdots\right)\,.
\]
Multiplying both sides of above by $\gamma P_\pi^\mt$, we have
\begin{eqnarray*}
\gamma P_\pi^\mt d_{\pi,\gamma} &=&  (1-\gamma) \left(\gamma P_\pi^\mt d_0 + \gamma^2 P_\pi^\mt d_1 + \gamma^3 P_\pi^\mt d_2 + \cdots\right) \\
&=& (1-\gamma) \left(\gamma d_1 + \gamma^2 d_2 + \gamma^3 d_3 + \cdots\right) \\
&=& d_{\pi,\gamma} - (1-\gamma)d_0\,.
\end{eqnarray*}
Therefore,
\begin{eqnarray*}
d_{\pi,\gamma} &=& \gamma P_\pi^\mt d_{\pi,\gamma} + (1-\gamma) d_0 \\
&=& (\gamma P_\pi + (1-\gamma) d_0 \mathbf{1}^\mt)^\mt 
d_{\pi,\gamma}\,.
\end{eqnarray*}
Accordingly, $d_{\pi,\gamma}$ is the fixed point of an induced transition matrix given by $P_{\pi,\lambda} \defeq \gamma P_\pi + (1-\gamma) d_0 \mathbf{1}^\mt$.  This completes the reduction from the discounted to the undiscounted case.

\section{Proof of Theorem~\ref{thm:consistency}}

Note that 
\begin{align*} 
\DistMMD{d}{\Bpi d}
 = \kkf{d - \Bpi d}{d - \Bpi d}\,.
\end{align*}
Following the definition of the strictly integrally positive definite kernels, we have that $\DistMMD{d}{\Bpi d}
= 0$ implies $d - \Bpi d = 0$, which in turn implies $d = d_\pi$ by the uniqueness assumption of $d_\pi$.  We have thus proved the first claim.

For the second claim, define $\delta_w = d - d_\pi$. 
Since $d_\pi - \Bpi d_\pi = 0$, we have 
\begin{align*} 
\DistMMD{d}{\Bpi d} 
& = \kkf{(d - \Bpi d)}{(d - \Bpi d)} \\
& = \kkf{(d - \Bpi d - (d_\pi  - \Bpi d_\pi) )}{(d - \Bpi d - (d_\pi  - \Bpi  d_\pi))} \\
& = \kkf{(\delta_w - \Bpi \delta_w)}{(\delta_w - \Bpi \delta_w)}. 
\end{align*}
Recalling that $\Bpi d(x) = \sum_{x_0} P_\pi(x|x_0) d(x_0)$, we have 
\begin{align*}
    \DistMMD{d}{\Bpi d}
& = \kkf{(\delta_w - \Bpi \delta_w)}{(\delta_w - \Bpi \delta_w)} \\
& = \sum_{x,\bar x} \kk(x,\bar x)(\delta_w(x)- \Bpi \delta_w(x)) (\delta_w(\bar x) - \Bpi \delta_w(\bar x))  \\
& = \sum_{x,\bar x} \kk(x,\bar x)\left (\delta_w(x)- \sum_{x_0}  P_\pi(x|x_0) \delta_w(x_0)\right ) \left (\delta_w(\bar x) - \sum_{\bar x_0} P_\pi(\bar x|\bar x_0) \delta_w(\bar x_0)\right ). \\
\end{align*}%
Define the adjoint operator of $\mathcal B_\pi$,  
$$
\mathcal P_\pi f(x)  \defeq \sum_{x'} P_\pi(x'|x)f(x'). $$ 
Denote by $\mathcal P_\pi^x$ 
the operator applied on 
$\kk(x,\bar x)$ in terms of variable $x$, 
that is, $\mathcal P_\pi^x \kk(x,\bar x)  \defeq \sum_{x'}  P_\pi(x'|x)\kk(x', \bar x).$ This gives 
\begin{align*}
    \DistMMD{d}{\Bpi d}
& = \sum_{x,x'} \kk(x,x')\left (\delta_w(x)- \sum_{x_0}  P_\pi(x|x_0) \delta_w(x_0)\right ) \left (\delta_w(\bar x) - \sum_{\bar x_0} P_\pi(\bar x|\bar x_0) \delta_w(\bar x_0)\right )  \\
& = \sum_{x,\bar x} \delta_w(x) \left (\kk(x,\bar x) - \mathcal P_\pi^{x} \kk(x,\bar x) - \mathcal P_\pi^{\bar x} \kk(x,\bar x) + \mathcal P_\pi^{x} \mathcal P_\pi^{\bar x}  \kk(x,\bar x)  \right ) \delta(\bar x) \\  
& = \sum_{x,\bar x} \delta_w(x)  \tilde{\kk}_\pi(x,\bar x) \delta_w(\bar x)\,,
\end{align*}%
completing the proof.

\section{Proof of Theorem~\ref{thm:generalization}}

First, note that the error can be decomposed as follows: 
\begin{align*} 
\D_\kk(d_{\hat w} ~||~ \Bpi d_{\hat w})
& \leq 
\D_\kk(d_{\hat w} ~||~ \Bpihat d_{ 
\hat w}) + \D_\kk(\Bpihat d_{\hat w} ~||~  \Bpi d_{ 
\hat w}) \\
& \leq 
\D_\kk(d_{w_*} ~||~ \Bpihat d_{ w_*}) + \D_\kk(\Bpihat d_{\hat w} ~||~  \Bpi d_{ 
\hat w}) \\
&\leq 
\D_\kk(d_{w_*} ~||~ \Bpi d_{ w_*}) + 
\D_\kk(\Bpihat d_{w_*} ~||~  \Bpi d_{ 
 w_*}) + 
\D_\kk(\Bpihat d_{\hat w} ~||~  \Bpi d_{ 
\hat w}) \\
&\leq 
\D_\kk(d_{w_*} ~||~ \Bpi d_{ w_*}) + 
2\sup_{w\in \mathcal W}\D_\kk(\Bpihat d_{w} ~||~  \Bpi d_{ 
 w}) \\
 & = \D_\kk(d_{w_*} ~||~ \Bpi d_{ w_*}) + 2 Z, 
\end{align*}
where we define 
$$
Z :=
\D_\kk(\Bpihat d_{w} ~||~  \Bpi d_{ 
 w}). 
$$
Therefore, we just need to bound $Z$. 

Denote by $\mathcal B_\kk: = \{f\colon f\in \Hset, ~ \norm{f}_{\Hset} \leq 1\}$ the unit ball of RKHS. 
Define 
$\norm{\mathcal B_\kk}:= \sup_{f\in \mathcal B_\kk}$ 
and $\mathcal R_n(\mathcal B_\kk)$ the expected Rademacher complexity of $\mathcal B_\kk$ of data size $n$. 
From classical RKHS theory (see Lemma~\ref{lem:rkhs} below), we know that $\norm{\mathcal B_\kk}_\infty \leq r_\kk$ and $\mathcal R_n(\mathcal B_\kk) \leq r_\kk/\sqrt{n}$, where $r_\kk \defeq \sqrt{\sup_{x\in\Xset} \kk(x,x')}$.

It remains to calculate  $Z \defeq \sup_{w\in \Wset} \DistMMD{\Bpihat d_{w}}{\Bpi d_{w})}$.
Recall from the definition of $\D_\kk$ that 
$$
\DistMMD{\Bpihat d_{w}}{\Bpi d_{ 
w})}  = \sup_{f \in \mathcal B_\kk}  \E_{x\sim \Bpihat d_{w}} [f(x)] -
\E_{x\sim \Bpi d_{w}} [f(x)]\,.
$$
Recall that $\{(x_i, x_i')\}_{i=1}^n$ is a set of transitions with $x_i' \sim d_\pi(\cdot ~|~ x_i)$, where $d_\pi$ denotes the transition probability under policy $\pi$.  
Following the definitions of $ \Bpihat d_w$ and $\Bpi d_w$, we have
\begin{align*} 
\Bpihat d_w(x) &= \sum_{i=1}^n w(x_i) \one{x=x_i'}\,, \\
\Bpi d_w(x) &= \sum_{i=1}^n w(x_i) d_\pi(x ~|~ x_i)\,. 
\end{align*}
Therefore,
\begin{align*} 
Z 
& = \sup_{w\in \Wset} \DistMMD{\Bpihat d_{w}}{\Bpi d_w)} \\ 
& = \sup_{w\in \Wset} 
 \sup_{f \in \mathcal B_\kk}  \E_{x\sim \Bpihat d_{w}} [f(x)] -
 \E_{x\sim \Bpi d_{w}} [f(x)] \\
& = 
  \sup_{w \in \mathcal W, f\in \mathcal B_\kk}  \frac{1}{n} \sum_{i=1}^n w(x_i)\left (f(x_i')
    - \E_{\bar x_i' \sim d_\pi}[ f(\bar x_i')|x_i]\right),
\end{align*}
where $\bar x_i'\sim d_\pi(\cdot~|~x_i)$ is an independent copy of $x_i'$ that follows the same distribution.  Note that $\bar{x}_i'$ is introduced only for the sample complexity analysis.
Note that $Z$ is a random variable, and $\E[Z]$ denotes its expectation w.r.t. random data $\{x_i, x_i'\}_{i=1}^n$. We assume different $(x_i, x_i')$ are independent with each other. 
First, by McDiarmid inequality, we have 
$$
P(Z \geq \E[Z] + \epsilon) \leq  \exp\left (
- \frac{n\epsilon^2}{2 \norm{\mathcal W}_\infty^2 \norm{\mathcal B_\kk}_\infty^2 }
\right ). 
$$
This is because when changing each data point $(x_i, x_i')$, 
the maximum change on $Z$ is at most $2\norm{\mathcal W}_\infty  \norm{\mathcal B_\kk}_\infty/n$. 
Therefore, we have  $Z \leq \E[Z] + \sqrt{2\log(1/\delta)\norm{\mathcal W}_\infty^2 \norm{\mathcal B_\kk}_\infty^2 / n}$ with probability at least $1-\delta$. 

Accordingly, we now just need to bound $\E[Z]$:

\begin{align*}
    \E[Z ] 
    & = \E_X \left [ \sup_{w \in \mathcal W, f\in \mathcal B_\kk} \frac{1}{n} \sum_{i=1}^n w(x_i)\left (f(x_i')
    -\E_{\bar X}[ f(\bar x_i')|x_i]\right)\right ] \\  
    & \leq \E_{X,\bar X}\left [ \sup_{w \in \mathcal W,  f\in \mathcal B_\kk}  \frac{1}{n} \sum_{i=1}^n w(x_i)(f(x_i')
    -  f(\bar x_i')  \right ] \\  
    & = \E_{X, \bar X, \sigma}\left [ \sup_{w \in \mathcal W,  f\in \mathcal B_\kk}  \frac{1}{n}  \sum_{i=1}^n \sigma_i w(x_i) (f(x_i')
    -  f(\bar x_i')  \right ] \\
    & ~~~~~~~\text{(because $\{\sigma_i\}$ are i.i.d. Rademacher random variables)} \\  
   & \leq 2\E\left [ \sup_{w \in \mathcal W,  f\in \mathcal B_\kk}  \frac{1}{n}   \sum_{i=1}^n \sigma_i w(x_i) f(x_i')
    \right ] \\    
    & = 2 \mathcal R_n(\mathcal W \otimes \mathcal B_\kk),   \\
\end{align*}
where 
$$
\mathcal W \otimes \mathcal B_\kk = \{f(x) g(x') \colon f\in \mathcal W, ~ g \in \mathcal B_{\kk}\}. 
$$
By Lemma~\ref{lem:productRad} below, we have 
$$
    \E[Z ]  \leq 2 \mathcal R_n(\mathcal W \otimes \mathcal B_\kk) \leq 
    4\left (\norm{\mathcal W}_\infty + \norm{\mathcal B_\kk}_\infty \right )
 \left (\mathcal R_n(\mathcal W) + \mathcal R_n (\mathcal B_\kk)\right ). 
$$
Combining the bounds, we have with probability $1-\delta$, 
\begin{align*} 
2Z & \leq 4 \mathcal R_n(\mathcal W \otimes \mathcal B_\kk) + \sqrt{\frac{8\log(1/\delta)\norm{\mathcal W}_\infty^2 \norm{\mathcal B_\kk}_\infty^2}{n}} \\ 
& \leq 8 \left (\norm{\mathcal W}_\infty + \norm{\mathcal B_\kk}_\infty \right )
 \left (\mathcal R_n(\mathcal W) + \mathcal R_n (\mathcal B_\kk)\right ) ~+~   \sqrt{\frac{8\log(1/\delta)\norm{\mathcal W}_\infty^2 \norm{\mathcal B_\kk}_\infty^2}{n}}. 
\end{align*}
Plugging Lemma~\ref{lem:rkhs}, we have 
$$
 2Z \leq 8 \left (\norm{\mathcal W}_\infty +  r_\kk \right ) \mathcal R_n(\mathcal W) +  
 \frac{8r_\kk  \left (\norm{\mathcal W}_\infty +  r_\kk  +  \norm{\mathcal W}_\infty  \sqrt{\log(1/\delta)/8 }\right)}{\sqrt{n}}. 
$$
Assume $r_{\max} = \max(\norm{\mathcal W}_\infty, r_\kk)$. We have $$
 2Z \leq 16 r_{\max} \mathcal R_n(\mathcal W) +  
 \frac{16 r_{\max}^2   + r_{\max}^2\sqrt{8\log(1/\delta)}}{\sqrt{n}}.
$$


\begin{lem}\label{lem:productRad}
Denote by $\norm{\mathcal W}_\infty = \sup\{\norm{f}_\infty \colon f\in \mathcal W\}$ the super norm of a function set $\Wset$. Then, 
\begin{align*} 
 \mathcal R_n(\mathcal W \otimes \mathcal B_\kk) 
\leq 2\left (\norm{\mathcal W}_\infty + \norm{\mathcal B_\kk}_\infty \right )
 \left (\mathcal R_n(\mathcal W) + \mathcal R_n (\mathcal B_\kk)\right )\,.
\end{align*}
\end{lem}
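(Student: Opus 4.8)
The plan is to linearize the product $f(x)g(x')$ via a polarization identity and then reduce to standard facts: the Ledoux–Talagrand contraction inequality together with the subadditivity, positive homogeneity, and sign symmetry of Rademacher complexity. Throughout I write $A := \norm{\mathcal W}_\infty$ and $B := \norm{\mathcal B_\kk}_\infty$, and recall
$$\mathcal R_n(\mathcal W\otimes\mathcal B_\kk) = \E\,\sup_{f\in\mathcal W,\, g\in\mathcal B_\kk}\frac1n\sum_{i=1}^n \sigma_i\, f(x_i)\,g(x_i'),$$
with the expectation over the data $\{(x_i,x_i')\}$ and the i.i.d.\ Rademacher signs $\{\sigma_i\}$; all manipulations below are carried out conditionally on the data and then averaged.

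First I would use the elementary identity $ab=\tfrac14\big((a+b)^2-(a-b)^2\big)$ with $a=f(x_i)$, $b=g(x_i')$, so that
$$\frac1n\sum_i\sigma_i f(x_i)g(x_i')=\frac14\cdot\frac1n\sum_i\sigma_i\big(f(x_i)+g(x_i')\big)^2-\frac14\cdot\frac1n\sum_i\sigma_i\big(f(x_i)-g(x_i')\big)^2 .$$
The supremum over $(f,g)$ of a difference is at most the sum of the two suprema, and since the $\sigma_i$ are symmetric the second term (with its minus sign absorbed) is again a Rademacher average of a squared class. Hence
$$\mathcal R_n(\mathcal W\otimes\mathcal B_\kk)\le \tfrac14\,\mathcal R_n(\mathcal S_+^2)+\tfrac14\,\mathcal R_n(\mathcal S_-^2),$$
where $\mathcal S_\pm:=\{(x,x')\mapsto f(x)\pm g(x')\colon f\in\mathcal W,\,g\in\mathcal B_\kk\}$ and $\mathcal S_\pm^2:=\{h^2\colon h\in\mathcal S_\pm\}$. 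The purpose of passing to these two separate classes is to decouple the two occurrences of the same pair $(f,g)$; enlarging the index set in this way only increases the complexity, so the inequality is safe.

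Next I would apply contraction. Every $h\in\mathcal S_\pm$ satisfies $\norm{h}_\infty\le A+B$, and $t\mapsto t^2$ is $2(A+B)$-Lipschitz on $[-(A+B),\,A+B]$ and vanishes at $0$, so Talagrand's contraction lemma gives $\mathcal R_n(\mathcal S_\pm^2)\le 2(A+B)\,\mathcal R_n(\mathcal S_\pm)$. Then subadditivity and sign symmetry, together with the fact that the Rademacher average over $\mathcal W$ evaluated at the coordinates $x_i$ (resp.\ over $\mathcal B_\kk$ at the $x_i'$) equals $\mathcal R_n(\mathcal W)$ (resp.\ $\mathcal R_n(\mathcal B_\kk)$), yield $\mathcal R_n(\mathcal S_\pm)\le \mathcal R_n(\mathcal W)+\mathcal R_n(\mathcal B_\kk)$. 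Chaining the three bounds,
$$\mathcal R_n(\mathcal W\otimes\mathcal B_\kk)\le \tfrac14\cdot 2\cdot 2(A+B)\big(\mathcal R_n(\mathcal W)+\mathcal R_n(\mathcal B_\kk)\big)=(A+B)\big(\mathcal R_n(\mathcal W)+\mathcal R_n(\mathcal B_\kk)\big),$$
which is even stronger than the stated bound, so the factor $2$ in the lemma is a comfortable over-estimate. (If one prefers to avoid cancellation across the two quadratic classes, the cruder three-term identity $ab=\tfrac12\big(a^2+b^2-(a-b)^2\big)$ leads directly to the constant $2$.)

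I expect the only genuinely delicate point to be the bookkeeping in the polarization step: the two quadratic terms must be treated as a \emph{sum of two independent function classes}, so that subadditivity is legitimate, rather than as a single class indexed by one $(f,g)$, and the sign on the $(f-g)^2$ term must be tracked using symmetry of the Rademacher variables. The remaining ingredients—the contraction inequality and the elementary invariances of $\mathcal R_n$—are entirely standard.
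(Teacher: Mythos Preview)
Your approach is essentially identical to the paper's: both use the polarization identity $fg=\tfrac14\big((f+g)^2-(f-g)^2\big)$, apply Lipschitz contraction for the squaring map on $[-(A+B),A+B]$, and then invoke subadditivity of $\mathcal R_n$ on the sum class $\mathcal W\oplus\mathcal B_\kk$. The paper packages the polarization and contraction steps by citing an external lemma and lands on the constant $2$, while your more explicit bookkeeping (splitting into $\mathcal S_+^2$ and $\mathcal S_-^2$ and using sign symmetry of the Rademacher variables) yields the sharper constant $1$; both arguments are correct.
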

\begin{proof}
Note that
$$f(x)g(x') = \frac{1}{4}(f(x) + g(x'))^2  -   \frac{1}{4}(f(x) - g(x'))^2.$$ 
Note that $x^2$ is $2(\norm{\mathcal W}_\infty + \norm{\mathcal B_{\kk}}_\infty)$-Lipschitz on interval 
$[-\norm{\mathcal W}_\infty - \norm{\mathcal B_{\kk}}_\infty, \norm{\mathcal W}_\infty + \norm{\mathcal B_{\kk}}_\infty].$
Applying Lemma C.1 of \cite{liu2018stein}, we have 
$$
 \mathcal R_n(\mathcal W \otimes \mathcal B_\kk ) \leq 2(\norm{\mathcal W}_\infty + \norm{\mathcal B_{\kk}}_\infty)(R_n(\mathcal W \oplus 
 \mathcal B_\kk), 
$$
where $\mathcal W \oplus 
 \mathcal B_\kk = \{f(x) + g(x') \colon f\in \mathcal W, ~ g \in \mathcal B_\kk\}$, and 
 \begin{align*} 
\mathcal R_{n}(\mathcal W \oplus 
 \mathcal B_\kk) 
 &    = \E_{\vv z}[\sup_{f\in \mathcal W,g \in\mathcal B_\kk} \sum_i z_i( f(x_i) + g(x_i'))] \\
 & \leq \E_{\vv z}[\sup_{f\in\mathcal W} \sum_i z_i f(x_i)]  +
 \E_{\vv z}[\sup_{g\in \mathcal B_\kk} \sum_i z_i g(x_i')] \\
& = \mathcal R_n(\mathcal W) + \mathcal R_n(\mathcal B_\kk)\,. 
 \end{align*}
 \end{proof}

\paragraph{Remark} The same result holds when $w$ is defined as a function of the whole transition pair $(x,x')$, that is, $w_i = w(x_i, x_i')$. 

\begin{lem}\label{lem:rkhs}
Let $\Hset$ be the RKHS with a positive definite kernel $\kk(x,x')$ on domain $\mathcal X$.  
Let $\mathcal B_\kk = \{f\in \mathcal H_\kk\colon \norm{f}_{\mathcal H_\kk}\leq 1\}$ be the unit ball of $\mathcal H_\kk$,
and $r_\kk = \sqrt{\sup_{x\in \mathcal X} \kk(x,x')}$. 
Then, 
$$
\norm{\mathcal B_\kk}_\infty \leq  r_\kk, 
~~~~~~~~~~~\mathcal R_n(\mathcal B_\kk) \leq\frac{ r_\kk}{\sqrt{n}}. 
$$
\end{lem}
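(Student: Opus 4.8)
The plan is to prove both claims using only the reproducing property of $\Hset$ together with Cauchy--Schwarz, which is the standard route for bounding the unit ball of an RKHS.

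For the sup-norm bound, I would fix $f\in\mathcal B_\kk$ and $x\in\mathcal X$ and write $f(x) = \langle f,\ \kk(x,\cdot)\rangle_{\Hset}$ by the reproducing property. Cauchy--Schwarz then gives $|f(x)|\le \norm{f}_{\Hset}\,\norm{\kk(x,\cdot)}_{\Hset} = \norm{f}_{\Hset}\sqrt{\kk(x,x)}\le r_\kk$, where the last step uses $\norm{f}_{\Hset}\le1$ and the definition $r_\kk = \sqrt{\sup_x \kk(x,x)}$. Taking the supremum over $x\in\mathcal X$ and then over $f\in\mathcal B_\kk$ yields $\norm{\mathcal B_\kk}_\infty\le r_\kk$.

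For the Rademacher bound, I would start from $\mathcal R_n(\mathcal B_\kk) = \E_{X,\sigma}\bigl[\sup_{f\in\mathcal B_\kk}\tfrac1n\sum_{i=1}^n \sigma_i f(x_i)\bigr]$, substitute $f(x_i)=\langle f, \kk(x_i,\cdot)\rangle_{\Hset}$, pull $f$ out of the sum by linearity of the inner product, and use the identity $\sup_{\norm{f}_\Hset\le1}\langle f, g\rangle_\Hset = \norm{g}_\Hset$ to get $\mathcal R_n(\mathcal B_\kk) = \E_{X,\sigma}\bigl\|\tfrac1n\sum_i \sigma_i \kk(x_i,\cdot)\bigr\|_{\Hset}$. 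Then I would apply Jensen's inequality to move $\E$ inside the square root, expand $\bigl\|\tfrac1n\sum_i \sigma_i \kk(x_i,\cdot)\bigr\|_{\Hset}^2$ by bilinearity into $\tfrac1{n^2}\sum_{i,j}\sigma_i\sigma_j\,\kk(x_i,x_j)$, and use $\E[\sigma_i\sigma_j]=\one{i=j}$ to discard the cross terms, leaving $\tfrac1{n^2}\sum_i \kk(x_i,x_i)\le r_\kk^2/n$ uniformly in $X$. Taking square roots (and noting the bound no longer depends on $X$) gives $\mathcal R_n(\mathcal B_\kk)\le r_\kk/\sqrt n$.

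I do not anticipate a genuine obstacle: this is a textbook fact. The only point needing a little care is the Rademacher computation — one must recognize that the supremum over the unit ball collapses exactly to the Hilbert-space norm of the empirical Rademacher mean $\tfrac1n\sum_i\sigma_i\kk(x_i,\cdot)$, and then control the resulting square root via Jensen rather than attempting a pointwise bound on that norm. Everything else follows from the reproducing property and Cauchy--Schwarz.
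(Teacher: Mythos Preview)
Your proposal is correct and matches the paper's proof essentially step for step: both parts use the reproducing property with Cauchy--Schwarz, and the Rademacher bound proceeds by collapsing the supremum to $\bigl\|\tfrac1n\sum_i\sigma_i\kk(x_i,\cdot)\bigr\|_{\Hset}$, applying Jensen to pass to the squared norm, and using $\E[\sigma_i\sigma_j]=\one{i=j}$ to retain only the diagonal.
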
 
\begin{proof}
These are standard results in RKHS theory, and we give a proof for completeness. For $\norm{\mathcal B_\kk}_\infty$, we just note that for any $f\in \mathcal B_\kk$ and $x\in \mathcal X$,  
$$
f(x) = \langle {f, ~ \kk(x, \cdot)}  \rangle_{\mathcal H_\kk} 
\leq \norm{f}_{\mathcal H_\kk} \norm{\kk(x, \cdot)}_{\mathcal H_\kk} \leq  \norm{\kk(x, \cdot)}_{\mathcal H_\kk} = \sqrt{\kk(x,x)} \leq r_\kk\,. 
$$
The Rademacher complexity can be bounded as follows:
\begin{align*}
    \mathcal R_n(\mathcal B_\kk) 
    & = \E_{X, \sigma}\left [  \sup_{f\in\mathcal B_\kk} \frac{1}{n} \sum_i \sigma_i f(x_i) \right] \\
    & \leq \E_{X,\sigma} \left [ \sup_{f\in\mathcal B_\kk}\left \langle  f,  \frac{1}{n}\sum_{i=1}^n \sigma_i \kk(x_i, \cdot) \right \rangle_{\mathcal H_\kk} \right ] \\ 
    & = \E_{X,\sigma} \left [  \norm{ \frac{1}{n} \sum_{i=1}^n \sigma_i \kk(x_i, \cdot)}_{\mathcal H_\kk}\right ]  \\
        & \leq \E_{X,\sigma} \left [  \norm{ \frac{1}{n} \sum_{i=1}^n \sigma_i \kk(x_i, \cdot)}_{\mathcal H_\kk}^2\right ]^{1/2}  \\
         & = 
         \E_{X,\sigma} \left [  \frac{1}{n^2} \sum_{i,j=1}^n \sigma_i\sigma_j \kk(x_i, x_j)\right ]^{1/2}  \\
         & = 
         \E_{X} \left [  \frac{1}{n^2} \sum_{i=1}^n \kk(x_i, x_i)\right ]^{1/2}  \\         
    & \leq \frac{r_\kk}{\sqrt{n}}. 
\end{align*}
\end{proof} 

\section{Mini-batch training}
\label{sec:bb-minibatch}

The objective $\ell(\omega)$ is not in a form that is ready for mini-batch training. It is possible to yield better scalability with a trick that has been found useful in other machine learning contexts~\cite{jean15using}. 
We start with a transformed objective:
\begin{eqnarray*}
L(\omega) &\defeq& \log \ell(\omega) \\
  &=& \log\sum_{i,j}\tilde{w}_i\tilde{w}_jK_{ij} - 2\log\sum_l\tilde{w}_l\,.
\end{eqnarray*}
Then,
\begin{eqnarray*}
\nabla L &=& \frac{\sum_{i,j}\nabla(\tilde{w}_i\tilde{w}_j)K_{ij}}{\sum_{uv}\tilde{w}_u\tilde{w}_vK_{uv}} - \frac{2\sum_i\nabla\tilde{w}_i}{\sum_l\tilde{w}_l} \\
&=& \frac{\sum_{i,j}\tilde{w}_i\tilde{w}_jK_{ij}\nabla\log(\tilde{w}_i\tilde{w}_j)}{\sum_{uv}\tilde{w}_u\tilde{w}_vK_{uv}} - \frac{2\sum_i\tilde{w}_i\nabla\log\tilde{w}_i}{\sum_l\tilde{w}_l} \\
&=& \hat{\E}_{ij}[\nabla\log(\tilde{w}_i\tilde{w}_j)] - \hat{\E}_i[\nabla\log\tilde{w}_i]\,,
\end{eqnarray*}
where $\hat{\E}_{ij}[\cdot]$ and $\hat{\E}_i[\cdot]$ correspond to two properly defined discrete distributions defined on $\Dset^2$ and $\Dset$, respectively. Clearly, $\nabla L$ can now be approximated by mini-batches by drawing random samples from $\Dset^2$ or $\Dset$ to approximate $\hat{E}_{ij}$ and $\hat{E}_i$.

\section{Pseudo-Code of Algorithm}\label{sec:ps-code}
This section includes the pseudo-code of our algorithm that we described in Section \ref{sec:bb}.

\begin{algorithm}
\caption{Black-box Off-policy Estimator based on MMD} \label{alg:bb}
\textbf{Inputs:} Dataset $\Dset=\{(s_i, a_i, r_i, s_i')\}_{1 \le i \le n}$ of behavior policy $\pibeh$, kernel function $\kk$, target policy $\pi$, parametric model $W(.)$.

\textbf{Output:} Target policy value estimate $\hat{\rho}_\pi$.
\begin{algorithmic}[1]
\STATE Estimate important weights  $\{w_i\}_{1\leq i\leq n}$ by solving the optimization problem in \eqref{eq:loss_kernel}. 
\RETURN $\hat{\rho}_\pi = \sum_{i=1}^n w_i r_i$ (see \eqref{eqn:bb-ips}).

\end{algorithmic} 
\end{algorithm}

\section{Bias-Variance Comparison}\label{sec:bias-var}

In this section, we compare the performance of our approach from the bias and variance perspective. In particular, we focus on the ModelWin MDP (Figure \ref{fig:modelwin_a}). In order to see the impact of training data size on the performance of different estimators, we consider different number of trajectories of length $4$. For each setting, we have $200$ independent runs and calculate the bias and variance of each method based on them. We compare our approach with \cite{liu18breaking} and \cite{nachum2019dualdice} that are two state-of-the-art methods in the off-policy estimation problem. Figure \ref{fig:bias-var} shows the results of this comparison.

As we can see, our method has a smaller bias but larger variance compared to other approaches on this problem. As mentioned before, DualDICE \cite{nachum2019dualdice} does not cover the undiscounted reward criterion. Therefore, instead of $\gamma=1$, we have used $\gamma=0.9999$ in the code shared by the authors. This induces some bias but reduces the variance of this method. To confirm this observation, we further decreased the $\gamma$ to 0.9 in DualDICE and observed that reducing $\gamma$ in this algorithm increases the bias but reduces the variance at the same time. 

The comparison to IPS of \cite{liu18breaking} highlights the significance of an assumption needed by IPS: the data must be drawn (approximately) from the stationary distribution of the behavior policy.  As the trajectory length is $4$ in the experiment, this assumption is apparently violated, thus the high bias of IPS.  In contrast, our method does not rely on such an assumption, so is more robust.

\begin{figure}
     \centering
     \begin{subfigure}[b]{0.35\textwidth}
         \centering
         \includegraphics[width=\textwidth]{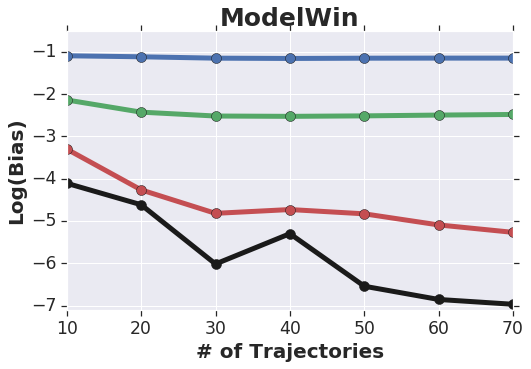}
     \end{subfigure}
     \hfill
     \begin{subfigure}[b]{0.35\textwidth}
         \centering
         \includegraphics[width=\textwidth]{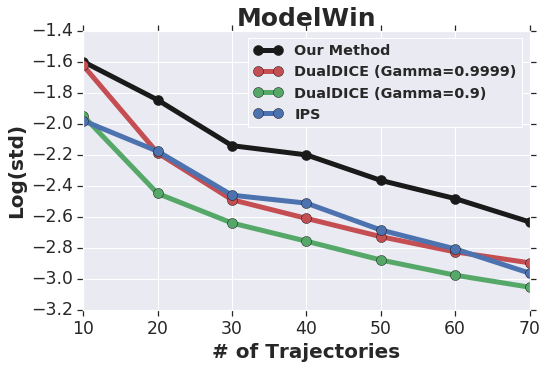}
     \end{subfigure}
     \caption{Bias and standard deviation of different methods for the ModelWin MDP (Fig. \ref{fig:modelwin_a}). Our method has a smaller bias but larger variance compared to other algorithms.}
     \label{fig:bias-var}
\end{figure}

\section{Sensitivity}\label{sec:sensitivity}

Finally, in Figure \ref{fig:sensitivity} we measure how robust our approach is to changing the behavior policy compared to other methods. In particular, we vary $\alpha_1$ that corresponds to the behavior policy to measure how the RMSE is affected. While $\alpha_2$ is fixed to 0.9, in each experiment we choose $\alpha_1$ from $\{0.7,0.5,0.3,0.1\}$. For each experiment, we use data from 50 trajectories (with $T_{\rm BEH}=200$) collected by the behavior policy and report results based on 20 Monte-Carlo samples. According to Figure \ref{fig:sensitivity}, as $\alpha_1$ diverges more from $\alpha_2$, the performance of all the methods degrade while our method is the least affected. It is worth mentioning that for the Mountain Car problem and $\alpha_1=0.1$, the behavior policy is close to a random policy and hence the car has not been able to drive up to top of the hill. This means that all the methods have constantly received a reward of $-1$ during all the steps and hence the estimated on-policy reward has been -1 for all the methods as well. Therefore, the RMSE of all four methods are equal in this case.

\begin{figure}
     \centering
     \begin{subfigure}[b]{0.35\textwidth}
         \centering
         \includegraphics[width=\textwidth]{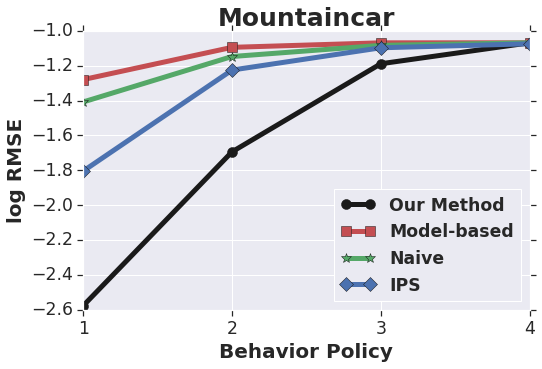}
     \end{subfigure}
     \hfill
     \begin{subfigure}[b]{0.35\textwidth}
         \centering
         \includegraphics[width=\textwidth]{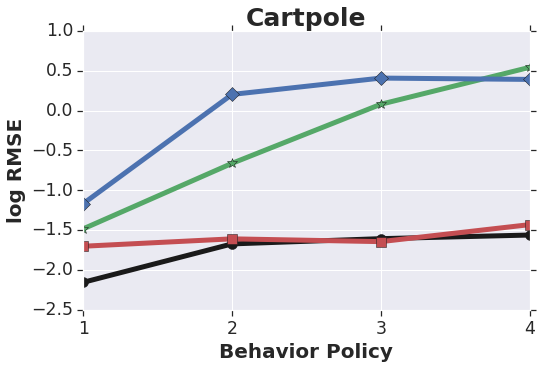}
     \end{subfigure}
     \caption{The RMSE of different methods w.r.t the target policy reward as we change the behavior policy. Our method outperform other approaches on different behavior policies.}
     \label{fig:sensitivity}
\end{figure}

\end{document}